\newtheorem{theorem}{Theorem}
\theoremstyle{definition}
\theoremstyle{remark}
\numberwithin{equation}{section}
\newcommand{\paren} [1] {\ensuremath{ \left( {#1} \right) }}
\newcommand{\condparen}[2]{\ensuremath{\left(#1\left\lvert#2\right.\right)}}
\newcommand{\bigO}[1]{\ensuremath{O\paren{#1}}}
\newcommand{\bigOTilde}[1]{\ensuremath{\tilde{O}\paren{#1}}}
\newcommand{\expct}[1]{\mathbb{E}\left[#1\right]}
\DeclareMathOperator{\lognormal}{LogNormal}
\title{An Online Learning Framework for\\ Energy-Efficient Navigation of Electric Vehicles}
\author{
Niklas Åkerblom$^{1,3}$\footnote{Contact Author}
\and
Yuxin Chen$^2$
\And
Morteza Haghir Chehreghani$^3$\\
\affiliations
$^1$Volvo Car Corporation\\
$^2$The University of Chicago\\
$^3$Chalmers University of Technology\\
\emails
niklas.akerblom@chalmers.se, chenyuxin@uchicago.edu, morteza.chehreghani@chalmers.se
}
\begin{document}

\maketitle

\begin{abstract}
Energy-efficient navigation constitutes an important challenge in electric vehicles, due to their limited battery capacity. We employ a Bayesian approach to model the energy consumption at road segments for efficient navigation. In order to learn the model parameters, we develop an online learning framework and investigate several exploration strategies such as Thompson Sampling and Upper Confidence Bound. We then extend our online learning framework to multi-agent setting, where multiple vehicles adaptively navigate and learn the parameters of the energy model. We analyze Thompson Sampling and establish rigorous regret bounds on its performance. Finally, we demonstrate the performance of our methods via several real-world experiments on Luxembourg SUMO Traffic dataset.
\end{abstract}

\section{Introduction}
Today, electric vehicles experience a fast-growing role in transport sys.pdftems. However, the applicability of these systems are often confined with the limited capacity of their batteries.
A common concern of electric vehicles is the so-called ``range anxiety'' issue. Due to the historically high cost of batteries, the range of electric vehicles has generally been much shorter than that of conventional vehicles, which has led to the fear of being stranded when the battery is depleted. Such concerns could be alleviated by improving the navigation algorithms and route planning methods for these systems. Therefore, in this paper we aim at developing principled methods for energy-efficient navigation of electric vehicles. 

Several works employ variants of shortest path algorithms for the purpose of finding the routes that minimize the energy consumption. Some of them, e.g. \cite{artmeier2010shortest,sachenbacher2011efficient}, focus on computational efficiency in searching for feasible paths where the constraints induced by limited battery capacity are satisfied. Both works use energy consumption as edge weights for the shortest path problem. They also consider recuperation of energy modeled as negative edge weights, since they identify that negative cycles can not occur due to the law of conservation of energy. In \cite{sachenbacher2011efficient} a consistent heuristic function for energy consumption is used with a modified version of A*-search to capture battery constraints at query-time.
In \cite{baum_et_al:LIPIcs:2017:7608}, instead of using fixed scalar energy consumption edge weights, the authors use piecewise linear functions to represent the energy demand, as well as lower and upper limits on battery capacity. 
This task has also been developed beyond the shortest path problems in the context of the well-known vehicle routing problem (VRP). In  \cite{basso2019energy}, VRP is applied to electrified commercial vehicles in a two-stage approach, where the first stage consists of finding the paths between customers with the lowest energy consumption and at the second stage the VRP including optional public charging station nodes is solved.

The aforementioned methods either assume the necessary information for computing the optimal path is available, or do not provide any satisfactory exploration to acquire it. Thereby, we focus on developing an \emph{online} framework to learn (explore) the parameters of the energy model adaptively alongside solving the navigation (optimization) problem instances. 
We will employ a Bayesian approach to model the energy consumption for each road segment. The goal is to learn the parameters of such an energy model to be used for efficient navigation. Therefore, we will develop an online learning framework to investigate and analyze several exploration strategies for learning the unknown parameters. 

Thompson Sampling (TS) \cite{thompson1933}, also called \emph{posterior sampling} and \emph{probability matching}, is a model-based exploration method for an optimal trade-off between exploration and exploitation. 
Several experimental \cite{ChapelleL11,GraepelCBH10,ChenRCK17} and theoretical studies \cite{osband2017posterior,BubeckL13,KaufmannKM12} have shown the effectiveness of Thompson Sampling in different settings. \cite{ChenRCK17} develop an online framework to explore the parameters of a decision model via Thompson Sampling in the application of interactive troubleshooting. 
\cite{wang2018thompson} use Thompson Sampling for combinatorial semi-bandits including the shortest path problem with Bernoulli distributed edge costs, and derives distribution-dependent regret bounds.

Upper Confidence Bound (UCB) \cite{Auer02} is another approach used widely for exploration-exploitation trade-off. A variant of UCB for combinatorial semi-bandits is introduced and analyzed in \cite{chen2013combinatorial}.
A Bayesian version of the Upper Confidence Bound method is introduced in \cite{kaufmann2012bayesian} and later analyzed in terms of regret bounds in \cite{kaufmann2018bayesian}. An alternative approach is proposed in \cite{reverdy2014modeling}. 

Beyond the novel online learning framework for energy-efficient navigation, we further extend our algorithms to the multi-agent setting, where multiple vehicles adaptively navigate and learn the parameters of the energy model. We then extensively evaluate the proposed algorithms on several synthetic navigation tasks, as well as on a real-world setting on Luxembourg SUMO Traffic dataset. 

\section{Energy Model}
We model the road network by a directed graph $\mathcal{G}(V, E, w)$ where each vertex $v \in V$ represents an intersection of the road segments, and $E$ indicates the set of directed edges. Each edge $e = (u, v) \in E$ is an ordered pair of vertices $u, v \in V$ such that $u \neq v$ and it represents the road segment between the intersections associated with $u$ and $v$. In the cases where bidirectional travel is allowed on a road segment represented by $(u,v) \in E$, we add an edge $(v,u) \in E$ in the opposite direction. A directed \emph{path} is a sequence of vertices $\langle v_1, v_2, \dots, v_n \rangle$, where $v_i \in V$ for $i = 1, \dots, n$ and $(v_i, v_{i+1}) \in E$ for $i = 1, \dots, n-1$. Hence, a path $p$ can also be viewed as a sequence of edges. If $p$ starts and ends with the same vertex, $p$ is called a cycle.

\looseness -1 We associate a weight function $w : E \rightarrow \mathbb{R}^+$ to each edge in the graph, representing the total energy consumed by a vehicle traversing that edge. The weight function is extended to a path $p$ by letting $w(p) = \sum_{e \in p} w(e)$. We may define other functions to specify the other attributes associated with intersections and road segments, such as the average speed $b : E \rightarrow \mathbb{R}^+$, the length $d : E \rightarrow \mathbb{R}^+$, and the inclination $\theta : E \rightarrow \mathbb{R}$.

\looseness -1 In our setting, the energy consumptions at different road segments are stochastic and a priori unknown. 
We adopt a Bayesian approach to model the energy consumption at each road segment $e\in E$, i.e., the edge weights. Such a choice provides a principled way to induce prior knowledge. Furthermore, as we will see, this approach fits  well with the online learning and exploration of the parameters of the energy model. 

We first consider a deterministic model of vehicle energy consumption $\epsilon(e)$ for an edge $e$, which will be used later as the prior. Similar to the model in \cite{basso2019energy}, our model is based on longitudinal vehicle dynamics. For convenience, we assume that vehicles drive with constant speed along individual edges so that we can disregard the longitudinal acceleration term. However, this assumption is only used for the prior. We then have the following equation for the approximated energy consumption (in watt-hours)
\begin{eqnarray}
    \label{eq:energy_consumption}
    \epsilon(e) &=& \frac{m g d(e) \sin (\theta(e)) + m g C_r d(e) \cos (\theta(e))}{3600 \eta} \nonumber\\
    && + \frac{0.5 C_d A \rho d(e) b^2(e)}{3600 \eta}
\end{eqnarray}

In Equation \ref{eq:energy_consumption} the vehicle mass $m$, the rolling resistance coefficient $C_r$, the front surface area $A$ and the air drag coefficient $C_d$ are vehicle-specific parameters. Whereas, the road segment length $d$, speed $b$ and inclination angle $\theta$ are location (edge) dependent. We treat the gravitational acceleration $g$ and air density $\rho$ as constants. The powertrain efficiency $\eta$ is vehicle specific and can be approximated as a quadratic function of the speed $b$ or by a constant $\eta = 1$ for an ideal vehicle with no battery-to-wheel energy losses.

Actual energy consumption can be either positive (traction and auxiliary loads like air conditioning) or negative (regenerative braking). If the energy consumption is modeled accurately and used as $w(e)$ in a graph $\mathcal{G}(V,E,w)$, the law of conservation of energy guarantees that there exists no cycle $c$ in $\mathcal{G}$ where $w(c) < 0$. However, since we are estimating the expected energy consumption from observations, this guarantee does not necessarily hold in our case. 

Thereby, while modeling energy recuperation is desirable from an accuracy perspective, it introduces some difficulties. In terms of computational complexity, Dijkstra's algorithm does not allow negative edge weights and Bellman-Ford's algorithm is slower by an order of magnitude. While there are methods to overcome this, they still assume that there are no negative edge-weight cycles in the network. Hence, we choose to only consider positive edge-weights when solving the energy-efficient (shortest path) problem. This approximation should still achieve meaningful results, since even with discarding recuperation, edges with high energy consumption will still be avoided.
So while the efficiency function $\eta$ has a higher value when the energy consumption is negative than when it is positive, we believe using a constant is a justified simplification as we only consider positive edge-level energy consumption in the optimization stage.\footnote{We emphasize that our generic online learning framework is independent of such approximations, and can be employed with any senseful energy model.}

Motivated by \cite{wu2015electric}, as the first attempt,  we assume the observed energy consumption $y(e)$ of a road segment represented by an edge $e$ follows a Gaussian distribution, given a certain small range of inclination, vehicle speed and acceleration. We also assume that $y(e)$ is independent from $y(e')$ for all $e' \in E$ where $e' \neq e$ and that we may observe negative energy consumption. The likelihood is then
\begin{equation*}
    \label{eq:gaussian_likelihood}
    p(y(e) \mid \mu(e), \sigma_s^2(e)) = \mathcal{N}(y(e) \mid \mu(e), \sigma_s^2(e))
\end{equation*}

Here, for clarity we assume the noise variance $\sigma_s^2$ is given.
We can then use a Gaussian conjugate prior over the mean energy consumption:
\begin{equation*}
    \label{eq:gaussian_prior}
    p(\mu(e) \mid \mu_0(e), \sigma_0^2(e)) = \mathcal{N}(\mu(e) \mid \mu_0(e), \sigma_0^2(e)) , 
\end{equation*}
where we choose $\mu_0(e) = \epsilon(e)$ and $\sigma_0^2(e) = (\vartheta \mu_0(e))^2$ for some constant $\vartheta > 0$. Due to the conjugacy properties, we have closed-form expressions for updating the posterior distributions with new observations of $y(e)$. For any path $p$ in $\mathcal{G}$, we have $\mathbb{E}[\sum_{e \in p} y(e)] = \sum_{e \in p} \mathbb{E}[y(e)]$, which means we can find the path with the lowest expected energy demand if we set $w(e) = \mathbb{E}[y(e)]$ and solve the shortest path problem over $\mathcal{G}(V,E,w)$. 
To deal with $\mathbb{E}[y(e)] < 0$, we instead set $w(e) = \mathbb{E}[z(e)]$ where $z(e)$ is distributed according to the rectified normal distribution $\mathcal{N}^R(\mu(e), \sigma_s^2(e))$, which is defined so that $z(e) = \max{(0, y(e))}$ and $y(e) \sim \mathcal{N}(\mu(e), \sigma_s^2(e))$. The expected value is then calculated as $\mathbb{E}[z(e)] = \mu(e)(1 - \Phi(-\mu(e)/\sigma(e))) + \sigma(e) \phi(-\mu(e)/\sigma(e))$, where $\Phi(x)$ and $\phi(x)$ are the standard Gaussian CDF and PDF respectively.

Alternatively, instead of assuming a rectified Gaussian distribution for the energy consumption of each edge, we model the non-negative edge weights by (conjugate) Log-Gaussian likelihood and prior distributions. By definition, if we have a Log-Gaussian random variable $Z \sim \lognormal(\mu, \sigma^2)$, then the logarithm of $Z$ is a Gaussian random variable $\ln{Z} \sim \mathcal{N}(\mu, \sigma^2)$. Therefore, we have the expected value $\mathbb{E}[Z] = \exp\{\mu + 0.5 \sigma^2\}$, the variance $\mathbf{Var}[Z] = (\exp\{\sigma^2\} - 1)\exp\{2\mu + \sigma^2\}$ and the mode $\mathbf{Mode}[Z] = \exp\{\mu - \sigma^2\}$. We can thus define the likelihood as
\begin{align}
    \label{eq:loggaussian_likelihood}
    &p\condparen{y(e)}{\mu(e), \sigma_s^2(e)} \nonumber\\ &\quad\;=\lognormal\condparen{y(e)}{\ln{\mu(e)} - \frac{\sigma_s^2(e)}{2}, \sigma_s^2(e)}
\end{align}
where $\mathbb{E}[y(e)] = \mu(e)$ and $\mathbf{Var}[y(e)] = (\exp\{\sigma_s^2(e)\} - 1)\mathbb{E}[y(e)]^2 $. We also choose the prior hyperparameters such that $\mathbb{E}[\mu(e)] = \mu_0(e)$ and $\mathbf{Var}[\mu(e)] = \sigma_0^2(e)$, where $\mu_0(e)$ and $\sigma_0^2(e)$ are calculated in the same way as for the Gaussian prior, in order to make fair comparisons between the Gaussian and Log-Gaussian results. The resulting prior distribution is
\begin{align}
    \label{eq:loggaussian_prior}
    p&\big(\mu(e) \big\vert \mu_0(e), \sigma_0^2(e)\big) =  \lognormal\left(\mu(e) \big\vert \right. \nonumber\\
    &\left. \ln{\mu_0(e)} - \frac{1}{2} \ln\paren{1 + \frac{\sigma_0^2(e)}{\mu_0^2(e)}},~\ln\paren{1 + \frac{\sigma_0^2(e)}{\mu_0^2(e)}} \right)
\end{align}

\section{Online Learning and Exploration of the Energy Model}\label{sec:online-single}
We develop an \emph{online learning} framework to explore the parameters of the energy model adaptively alongside solving sequentially the navigation (optimization) problem at different sessions.
At the beginning, the exact energy consumption of the road segments and the parameters of the respective model are unknown. Thus, we start with an approximate and possibly inaccurate estimate of the parameters. We use the current estimates to solve the current navigation task. We then update the model parameters according to the observed energy consumption at different segments (edges) of the navigated path, and use the new parameters to solve the next problem instance. 

Algorithm \ref{alg:online_algorithm} describes these steps, where $\mu_t$ and $\sigma_t^2$ refer to the current parameters of the energy model for all the edges at the current session $t$, which are used to obtain the current edge weights $w_t$'s. We solve the optimization problem using $w_t$'s to determine the optimal action (or the arm in the nomenclature of multi-armed bandit problems) $a_t$, which in this context is a path. The action $a_t$ is applied and a reward $r(a_t)$ is observed, consisting of the actual measured energy consumption for each of the passed edges. Since we want to minimize energy consumption, we regard it as a negative reward when we update the parameters (shown for example for the Gaussian model in Algorithm \ref{alg:gaussian_update_parameters}). $T$ indicates the total number of sessions, sometimes called the horizon. 
For measuring the effectiveness of our online learning algorithm, we consider its regret, which is the difference in the total expected reward between always playing the optimal action and playing actions according to the algorithm. Formally, 
the instant regret at session $t$ is defined as $\Delta_t := (\max_a \expct{r(a)}) - \expct{r(a_t)}$ where $\max_a \expct{ r(a)}$ is the maximal expected reward for any action, and the cumulative regret is defined as $R_T=\sum_{t=1}^T \Delta_t$.

\begin{algorithm}[tb]
\caption{Online learning for energy-efficient navigation}
\label{alg:online_algorithm}
\begin{algorithmic}[1]
\Require $\mu_0, \sigma^2_0$
\For{$t \leftarrow 0, 1, \dots, T$}
    \label{row:optimization_objective}\State $w_t \leftarrow $ \Call{GetEdgeWeights}{$t, \mu_t, \sigma_t^2$}
    \label{row:optimization_solution}\State $a_t \leftarrow $ \Call{SolveOptimizationToFindAction}{$w_t$}
    \label{row:apply_action}\State $r_t \leftarrow $ \Call{ApplyActionAndObserveReward}{$a_t$}
    \label{row:update_parameters}\State $\mu_{t+1}, \sigma^2_{t+1} \leftarrow $ \Call{UpdateParameters}{$a_t, r_t, \mu_t, \sigma^2_t$}
\EndFor
\end{algorithmic}
\end{algorithm}

\begin{algorithm}[tb]
\caption{Gaussian parameter update of the energy model}
\label{alg:gaussian_update_parameters}
\begin{algorithmic}[1]
\Procedure{UpdateParameters}{$a_t, r_t, \mu_t, \sigma_t^2$}
\For{each edge $e \in a_t$}
\State $\epsilon_t(e) \leftarrow - r_t(e)$
\State $\sigma_{t+1}^2(e) \leftarrow \left(\frac{1}{\sigma_{t}^2(e)} + \frac{1}{\sigma_{s}^2(e)}\right)^{-1}$
\State $\mu_{t+1}(e) \leftarrow \sigma_{t+1}^2(e) \left( \frac{\mu_{t}(e)}{\sigma_{t}^2(e)} + \frac{\epsilon_t(e)}{\sigma_{s}^2(e)}\right)$
\EndFor
\State \Return $\mu_{t+1}, \sigma_{t+1}^2$
\EndProcedure
\end{algorithmic}
\end{algorithm}

\subsection{Shortest Path Problem as Multi-Armed Bandit}

A combinatorial bandit \cite{gai2012combinatorial} is a multi-armed bandit problem where an agent is only allowed to pull sets of arms instead of an individual arm. However, there may be restrictions on the feasible combinations of the arms. We consider the combinatorial semi-bandit case where the rewards are observed for each individual arm pulled by an agent during a round. 

A number of different combinatorial problems can cast to multi-armed bandits in this way, among them the shortest path problem is the focus of this work. Efficient algorithms for the deterministic problem (e.g. Dijkstra's algorithm \cite{dijkstra1959note}) can be used as an oracle \cite{wang2018thompson} to provide feasible sets of arms to the agent, as well as to maximize the expected reward.

We connect this to the optimization problem in Algorithm \ref{alg:online_algorithm}, where we want to find an action $a_t$. At time $t$, let $\mathcal{G}(V,E,w_t)$ be a directed graph with weight function $w_t$ and sets of vertices $V$ and edges $E$. Given a source vertex $u \in V$ and a target vertex $v \in V$, let $P$ be the set of all paths $p$ in $\mathcal{G}$ such that $p = \langle u, \dots, v \rangle$. Assuming non-negative edge costs $w_t(e)$ for each edge $e \in E$, the problem of finding the shortest path (action $a_t$) from $u$ to $v$ can be defined as
\begin{equation}
    a_t = \text{arg}\,\min\limits_{p \in P}\,\sum_{e\in p}w_t(e)
\end{equation}

\subsection{Thompson Sampling}\label{sec:alg:ts}

Since the greedy method does not actively explore the environment, there are other methods which performs better in terms of minimizing cumulative regret. One commonly used method is $\epsilon$-greedy, where a (uniform) random action is taken with probability $\epsilon$ and the greedy strategy is used otherwise. However, this method is not well suited to the shortest path problem, since a random path from the source vertex to the target would almost certainly be very inefficient in terms of accumulated edge costs.

An alternative method for exploration is Thompson Sampling (TS). In our Bayesian setup, the greedy strategy chooses the action which maximizes the expected reward according to the current estimate of the mean rewards. In contrast, with TS the agent samples from the model, i.e., it selects an action which has a high probability of being optimal by sampling mean rewards from the posterior distribution and choosing an action which maximizes those during each session.

Thompson Sampling for the energy consumption shortest path problem is outlined in Algorithm \ref{alg:ts_optimization_objective}, where it can be used in Algorithm \ref{alg:online_algorithm} to obtain the edge weights in the network (only shown for the Gaussian model). In the following, we provide an upper bound on the cumulative regret of Thompson Sampling for the shortest path navigation problem.
\iftoggle{longversion}{\footnote{We defer the full proof to the appendix.}\label{footnote:longer}
}

\begin{theorem}\label{thm:graph_regret_bound_ts}
Let $\mathcal{G}(V, E, w)$ be a weighted directed graph. 
Let $N$ be the number of paths in $\mathcal{G}$. The expected cumulative regret of Algorithm~\ref{alg:ts_optimization_objective} 
satisfies that $\text{BayesRegret}(T) = \bigOTilde{|V|^2\sqrt{|E|T}}$. Furthermore,
$\expct{R_T}=\min\{\bigO{N \log T}, \bigO{|E||V|\log T}\}$.
\end{theorem}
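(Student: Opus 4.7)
The plan is to cast Algorithm~\ref{alg:ts_optimization_objective} as Thompson Sampling for a combinatorial semi-bandit whose base arms are the $|E|$ edges, whose super-arms are the $u$-to-$v$ paths in $\mathcal{G}$, and whose per-round reward is the linear functional $\langle \mu^*, a\rangle := \sum_{e\in a}\mu^*(e)$ of the unknown mean vector $\mu^*\in\mathbb{R}^{|E|}$. Writing $a^* := \argmin_a \langle \mu^*, a\rangle$, $\Delta_a := \langle \mu^*, a\rangle - \langle \mu^*, a^*\rangle$, and $N_T(a)$ for the number of times $a$ is played, the cumulative regret decomposes as $\expct{R_T} = \sum_a \Delta_a\,\expct{N_T(a)}$, and I would bound this in two complementary ways.

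First I would set up the concentration machinery. After $n_{t,e}$ observations of edge $e$, the Gaussian--Gaussian conjugate posterior maintained by Algorithm~\ref{alg:gaussian_update_parameters} is $\mathcal{N}(\hat{\mu}_t(e), \hat{\sigma}_t^2(e))$ with $\hat{\sigma}_t^2(e) = \Theta(\sigma_s^2(e)/n_{t,e})$, so both the Thompson sample $\tilde{\mu}_t(e)$ and the posterior mean $\hat{\mu}_t(e)$ deviate from $\mu^*(e)$ by at most $c\sigma_s\sqrt{(\log T)/n_{t,e}}$ uniformly over $t\le T$ and $e\in E$, outside a union-bound failure event of total probability $\bigO{1/T}$. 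Combining this with the greedy selection rule $\tilde{\mu}_t(a_t)\le \tilde{\mu}_t(a^*)$ yields the sandwich $\Delta_{a_t}\le |\langle \mu^* - \tilde{\mu}_t, a_t\rangle| + |\langle \tilde{\mu}_t - \mu^*, a^*\rangle|\le 2c\sigma_s\sum_{e\in a_t\cup a^*}\sqrt{(\log T)/n_{t,e}}$, which is the workhorse inequality for all three bounds.

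For the distribution-dependent bounds I would adapt the analysis of Wang and Chen~\cite{wang2018thompson} to Gaussian rewards. For each suboptimal path $a$, I would argue that on the good event $a_t=a$ can only occur when some edge $e\in a$ has been pulled at most $\Theta(|V|^2\log T/\Delta_a^2)$ times, giving $\expct{N_T(a)} = \bigO{(\log T)/\Delta_a^2}$ and hence $\expct{R_T}=\bigOTilde{N\log T}$ after summing $\Delta_a\cdot\expct{N_T(a)}$ over the $N$ paths. The alternative $\bigOTilde{|E||V|\log T}$ branch is obtained by per-edge rather than per-path accounting: the aggregate regret charged to any single edge across all rounds is $\bigO{|V|\log T}$ (a path-length factor of $|V|-1$ multiplying a logarithmic per-pull regret), and summing over the $|E|$ edges gives the claim; taking the minimum of the two branches yields the stated bound.

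For the distribution-independent bound, the sandwich combined with the standard potential-function inequality $\sum_t\sum_{e\in a_t}1/\sqrt{n_{t,e}}\le 2\sum_e\sqrt{N_T(e)}\le 2\sqrt{|E|\sum_e N_T(e)}\le 2\sqrt{|E|(|V|-1)T}$ naively gives $\bigOTilde{\sqrt{|V||E|T}}$. The extra $|V|$ factor in $\bigOTilde{|V|\sqrt{|V||E|T}}$ comes from a uniform-over-paths concentration step that controls $\max_a|\langle \tilde{\mu}_t-\mu^*, a\rangle|$ rather than the individual coordinate deviations, and which inflates the confidence radius by a factor growing like the maximum path length. The main obstacle I anticipate is precisely this uniform-concentration step, since a naive union bound over all $N$ paths inflates the $\log$ factor to $\log N$ which can scale much worse than $|V|$; I would address this with a chaining-style refinement that exploits the fact that any path uses at most $|V|-1$ of the $|E|$ edges and that the path-cost functional is Lipschitz in the sampled means. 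A secondary but delicate issue is verifying that the rectification used to enforce non-negative edge weights for the shortest-path oracle does not spoil the sandwich argument, which I would resolve by noting that rectification is a contraction and hence can only shrink $|\langle \tilde{\mu}_t-\mu^*, a\rangle|$ in expectation.
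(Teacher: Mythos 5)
Your combinatorial semi-bandit framing and the two gap-dependent bounds broadly track the paper's first view of the problem: the paper obtains $\bigOTilde{N\log T}$ by treating each path as an arm \`a la \cite{agrawal2012analysis} (your per-path accounting) and $\bigOTilde{|E||V|\log T}$ from the semi-bandit analysis of \cite{wang2018thompson} (your per-edge accounting), then takes the minimum. As a sketch this part is acceptable, though note that your ``workhorse'' sandwich $\Delta_{a_t}\le 2c\sigma_s\sum_{e\in a_t\cup a^*}\sqrt{(\log T)/n_{t,e}}$ is really a UCB-style argument: for Thompson Sampling the sampled costs of the optimal path's edges are not guaranteed to be optimistic, and controlling their deviation requires $n_{t,e}$ to grow for edges of $a^*$ that the algorithm may rarely visit. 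This is precisely the difficulty that makes \cite{wang2018thompson} nontrivial, and your sketch glosses over it; ``adapting Wang and Chen'' is the right instinct, but the adaptation is where the work lives, not a routine step.

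The genuine gap is in the distribution-independent bound $\bigOTilde{|V|\sqrt{|V||E|T}}$. The paper does not derive it from the semi-bandit view at all: it recasts the online shortest-path problem as an episodic reinforcement learning problem (vertices are states, edges are actions, episode length at most $|V|$) and invokes the posterior-sampling regret bound of \cite{osband2017posserior}\footnote{Sic: the paper cites \cite{osband2017posterior}.}, which is where the factor $|V|\sqrt{|V||E|T}$ comes from. Your attempt to recover the same bound from the potential-function inequality is confused in a telling way: the ``naive'' calculation you present already yields $\bigOTilde{\sqrt{|V||E|T}}$, which is \emph{stronger} than the claimed bound, so there is nothing to repair; and the uniform-over-paths concentration step you then introduce to ``explain'' the extra $|V|$ is not actually required by your own sandwich inequality, which involves only per-edge deviations summed over the at most $2(|V|-1)$ edges of $a_t\cup a^*$. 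If your potential argument were airtight you would have proved a better theorem; since it is not (because of the TS-specific issue above), the honest route to the stated $\bigOTilde{|V|\sqrt{|V||E|T}}$ bound is the RL reduction, which your proposal misses entirely. The rectification remark is a reasonable aside but does not bear on the main argument.
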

\begin{proof}[Proof sketch]  
The online shortest path problem could be viewed as (1) a combinatorial semi-bandit problem with linear reward functions, where the feedback includes all the sub-arms (edges) in the played arm (path from source to target); or (2) a reinforcement learning problem, where node $v\in V$ corresponds to a state, and each edge $e\in E$ corresponds to an action. For (1), we use (instance dependent) expected cumulative regret bounds $\bigO{N \log T}$ \cite{agrawal2012analysis}  and $\bigO{|E||V|\log T}$ \cite{wang2018thompson}, yielding the (instance dependent) bound 
$\min\{\bigO{N \log T}, \bigO{|E||V|\log T}\}$; for (2), we get a Bayesian regret bound as $\bigOTilde{|V|^2\sqrt{|E|T}}$ \cite{osband2017posterior} (we ignore logarithmic factors). Therefore, combining the two views, 
we obtain a bound on the regret.   
\end{proof}

\begin{algorithm}[tb]
\caption{\label{alg:ts_optimization_objective}Thompson Sampling}
\begin{algorithmic}[1]
\Procedure{GetEdgeWeights}{$t, \mu_t, \sigma_t^2$}
\For{each edge $e \in E$}
\State $\hat{\mu} \leftarrow$ Sample from posterior
$\mathcal{N}(\mu_{t}(e), \sigma^2_{t}(e))$
\State $w_{t}(e) \leftarrow \mathbb{E}[y]$ where $y \sim \mathcal{N}^R(\hat{\mu}, \sigma_s^2(e))$ 
\EndFor
\State \Return $w_t$
\EndProcedure
\end{algorithmic}
\end{algorithm}

\subsection{Upper Confidence Bound}
Another class of algorithms demonstrated to work well in the context of multi-armed bandits is the collection of the methods developed around Upper Confidence Bound (UCB). Informally, these methods are designed based on the principle of optimism in the face of uncertainty. The algorithms achieve efficient exploration by choosing the arm with the highest empirical mean reward added to an exploration term (the confidence width). Hence, the arms chosen are those with a plausible possibility of being optimal.

In \cite{chen2013combinatorial} a combinatorial version of UCB (CUCB) is shown to achieve sublinear regret for combinatorial semi-bandits. However, using a Bayesian approach is beneficial in this problem since it allows us to employ the  theoretical knowledge on the energy consumption in a prior. Hence, we consider BayesUCB \cite{kaufmann2012bayesian} and adapt it to the combinatorial semi-bandit setting. Similar to \cite{kaufmann2012bayesian}, we denote the quantile function for a distribution $\lambda$ as $Q(\alpha,\lambda)$ such that $\mathbb{P}_\lambda (X \leq Q(\alpha, \lambda)) = \alpha $. The idea of that work is to use upper quantiles of the posterior distributions of the expected arm rewards to select arms. If $\lambda$ denotes the posterior distribution of an arm and $t$ is the current session, the Bayesian Upper Confidence Bound (BayesUCB) is $Q(1 - 1/t, \lambda)$. 

This method is outlined in Algorithm \ref{alg:ucb_algorithm} for the Gaussian model. Here, since the goal is to minimize the energy consumption which can be considered as the negative of  the reward, thus, we use the lower quantile $Q(1/t, \lambda)$.

\begin{algorithm}[tb]
\caption{\label{alg:ucb_algorithm}BayesUCB}
\begin{algorithmic}[1]
\Procedure{GetEdgeWeights}{$t, \mu_t, \sigma_t^2$}
\For{each edge $e \in E$}
\State $w_{t}(e) \leftarrow \max \left(0, Q\left(\frac{1}{t}, \mathcal{N}(\mu_{t}(e),\sigma_{t}^2(e))\right)\right)$
\EndFor
\State \Return $w_t$
\EndProcedure
\end{algorithmic}
\end{algorithm}

\begin{figure*}
  \centering
  \begin{subfigure}[b]{.385\textwidth}
    \centering
    {
      \includegraphics[trim={5pt 15pt 0pt 20pt}, width=\textwidth]{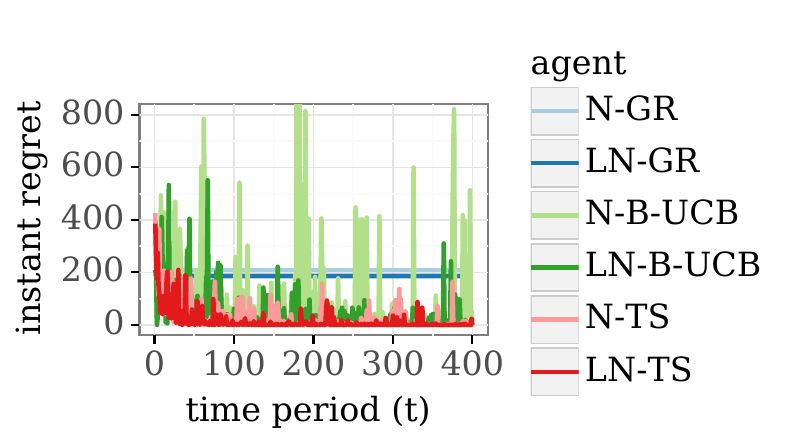}
      \caption{}
      \label{fig:instant_regret}
    }
  \end{subfigure}
  \hspace{-2mm}
  \begin{subfigure}[b]{.395\textwidth}
    \centering
    {
      \includegraphics[trim={0pt 15pt 0pt 20pt}, width=\textwidth]{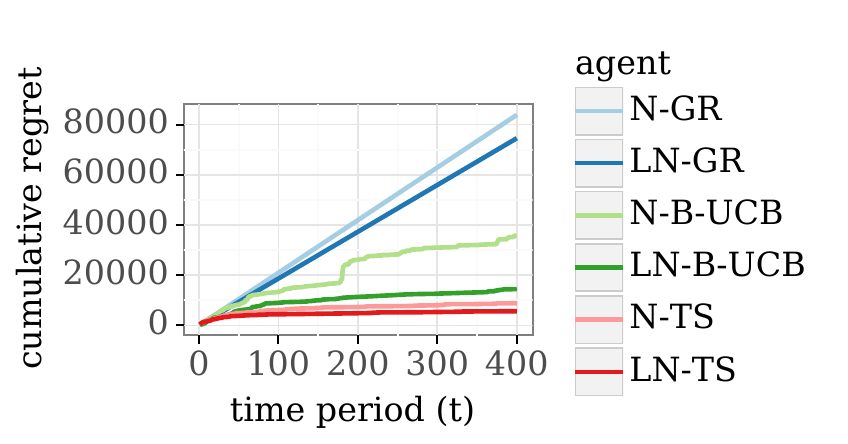}
      \caption{}
      \label{fig:cumulative_regret}
    }
  \end{subfigure}
  \hspace{-3mm}
  \begin{subfigure}[b]{.21\textwidth}
    \centering
    {
      \includegraphics[trim={0pt 15pt 5pt 20pt}, width=\textwidth]{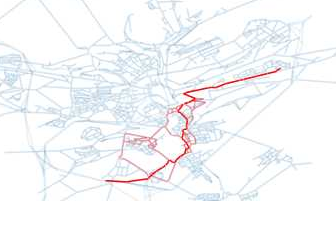}
      \caption{}
      \label{fig:exploration_ts}
    }
  \end{subfigure}
    \caption{Experimental results on the real-world dataset. (a) Instant regret for Thompson Sampling (TS), BayesUCB (B-UCB) and probabilistic greedy (GR) algorithms, applied to Gaussian (prefix N) and Log-Gaussian bandits (prefix LN). (b) Cumulative regret results. (c) Exploration with Thompson Sampling, where the red lines indicate the edges visited by the agent during exploration.}
    \label{fig:exp:results_real}
\end{figure*}

\section{Multi-Agent Learning and Exploration}
\label{sec:multi_agent}

The online learning may speed up via having multiple agents exploring simultaneously and sharing information on the observed rewards with each other. In our particular application, this corresponds to a fleet of vehicles of similar type sharing information about energy consumption across the fleet. Such a setting can be very important for road planning, electric vehicle industries and city principals. 

The communication between the agents for the sake of sharing the the observed rewards can be synchronous or asynchronous. In this paper, we consider the synchronous setting, where the vehicles drive concurrently in each time step and share their accumulated knowledge with the fleet before the next iteration starts. At each session, any individual vehicle independently selects a path to explore/exploit according to the online learning strategies provided in Section \ref{sec:online-single}. It is notable that our online learning framework and theoretical analysis are applicable to the asynchronous setting in a similar manner. 
Below, we provide a regret bound for the TS-based multi-agent learning algorithm under the synchronous setting.
\begin{theorem}[Synchronous Multi-agent Learning]\label{thm:ma_regret_bound_ts} Let $K$ be the number of agents,  
and $T$ be the number of sessions. Given a weighted directed graph $\mathcal{G}(V, E, w)$, the expected cumulative regret of the synchronized multi-agent online learning algorithm (i.e., $K$ agents working in parallel in each session) invoking Algorithm \ref{alg:ts_optimization_objective} satisfies
$\expct{R^F_T} = O\left(NK + \expct{R_{TK}}\right)$, where $R^F_T$ is the total (horizon $T$) multi-agent regret, $R_{TK}$ is the single-agent (horizon $TK$) regret of Algorithm \ref{alg:ts_optimization_objective} and $N$ is the number of paths in $\mathcal{G}$.
\end{theorem}

\begin{proof}[Proof sketch]
The proof considers the online shortest path problem as a combinatorial semi-bandit problem, and treats the multi-agent setting as a sequential algorithm with delayed feedback. The result is obtained as a corollary of 
Theorem \ref{thm:graph_regret_bound_ts} of Section \ref{sec:alg:ts} and Theorem 6 of \cite{joulani2013online} which converts online algorithms for the nondelayed case to ones that can handle delays in the feedback, while retaining their theoretical guarantees.
\end{proof}


\begin{figure*}[!t]
  \centering
  \begin{subfigure}[b]{.35\textwidth}
    \centering
    {
      \includegraphics[trim={5pt 15pt 0pt 20pt}, width=\textwidth]{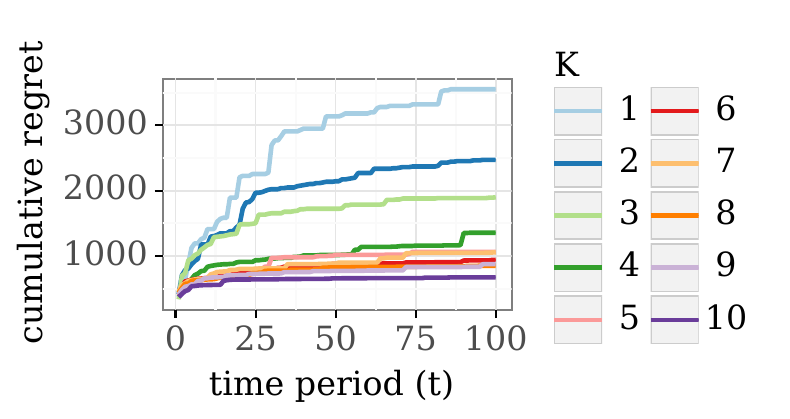}
      \caption{}
      \label{fig:multi_cumulative_regret}
    }
  \end{subfigure}
  \begin{subfigure}[b]{.31\textwidth}
    \centering
    {
      \includegraphics[trim={0pt 15pt 0pt 20pt}, width=\textwidth]{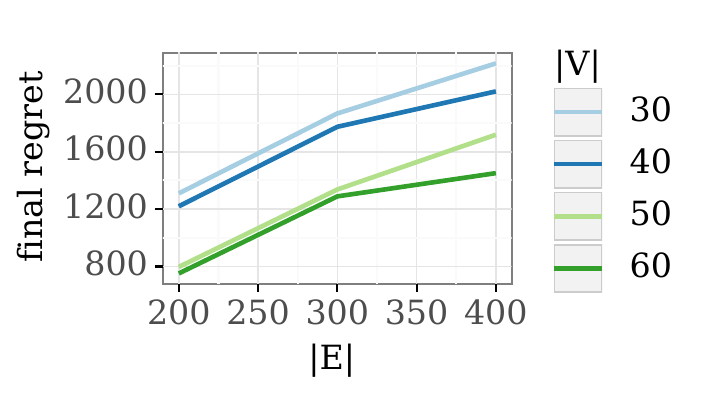}
      \caption{}
      \label{fig:cumulative_regret_edges}
    }
  \end{subfigure}
  \begin{subfigure}[b]{.31\textwidth}
    \centering
    {
      \includegraphics[trim={0pt 15pt 20pt 20pt}, width=\textwidth]{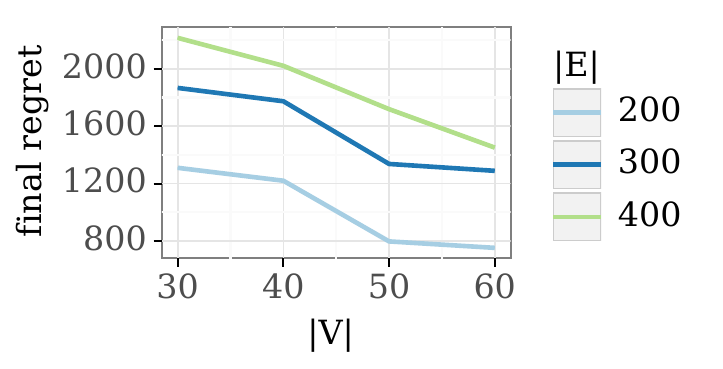}
      \caption{}
      \label{fig:cumulative_regret_vertices}
    }
  \end{subfigure}
    \caption{Experimental results in the multi-agent setting and on the synthetic networks. (a) Average cumulative regret for Thompson Sampling in the multi-agent setting. $K$ denotes the number of agents. (b) Final cumulative regret ($T = 2000$) on synthetic networks as a function of $|E|$. (c) Final cumulative regret ($T = 2000$) on synthetic networks as a function of $|V|$.}
    \label{fig:exp:results_sync}
\end{figure*}


\section{Experimental Results}

In this section, we describe different experimental studies.
For real-world experiments, we extend the simulation framework presented in \cite{tstutorial} to network/graph bandits with general directed graphs, in order to enable exploration scenarios in realistic road networks. 
Furthermore, we add the ability to generate synthetic networks of specified size to this framework, in order to verify the derived regret bounds (as the ground truth is provided for the synthetic networks).

\subsection{Real-World Experiments} \label{sec:exp:rw}


We utilize the Luxembourg SUMO Traffic (LuST) Scenario data \cite{codeca2017luxembourg} to provide realistic traffic patterns and vehicle speed distributions for each hour of the day. This is used in conjunction with altitude and distances from map data, as well as vehicle parameters from an electric vehicle.
The resulting graph $\mathcal{G}$ has $|V| = 2247$ nodes and $|E| = 5651$ edges, representing a road network with $955$ km of highways, arterial roads and residential streets. The difference in altitude between the lowest point in the road network and the highest is $157$ meters. 

We use the default vehicle parameters that were provided for the energy consumption model in \cite{basso2019energy}, with vehicle frontal surface area $A = 8$ meters, air drag coefficient $C_d = 0.7$ and rolling resistance coefficient $C_r = 0.0064$. The vehicle is a medium duty truck with vehicle mass $m = 14750$ kg, which is the curb weight added to half of the payload capacity. 

We approximate the powertrain efficiency during traction by $\eta^{+} = 0.88$ and powertrain efficiency during regeneration by $\eta^{-} = 1.2$. In addition, we use the constant gravitational acceleration $g = 9.81$ $\text{m/s}^2$ and air density $\rho = 1.2$ $\text{kg/m}^3$.

To simulate the ground truth of the energy consumption, we take the average speed $b(e)$ of each edge $e$ from a full 24 hour scenario in the LuST traffic simulation environment. In particular, we observe the values during a peak hour (8 AM), with approximately 5500 vehicles active in the network. This hour is selected to increase the risk of traffic congestion, hence finding the optimal path becomes more challenging. We also get the variance of the speed of each road segment from LuST. Using this information, we sample the speed value for each visited edge  and use the energy consumption model to generate the rewards for the actions.

For the Gaussian likelihood $p(\epsilon(e) | \mu_0, \sigma_s^2)$, we assume $\sigma_s$ to be proportional to $\epsilon(e)$ in Equation \ref{eq:energy_consumption}, such that $\sigma_s^2(e) = (\varphi \epsilon(e))^2$. For the Log-Gaussian likelihood, we choose $\sigma_s^2(e) = \ln(1 - (\varphi \epsilon(e))^2 / (\mu_0^2(e)))$, so that it has the same variance as the Gaussian likelihood. We set $\varphi = 0.1$ for both.
For the prior $p(\mu(e) | \mu_0(e), \sigma_0^2(e))$ of an edge $e \in E$, we use the speed limit of $e$ as $b(e)$, indicating that the average speed is unknown. Then $\mu_0(e) = \epsilon(e)$ and $\sigma_0^2 = (\vartheta \mu_0(e))^2$, where $\vartheta = 0.25$.


As a baseline, we consider the greedy algorithm for both the Gaussian and Log-Gaussian models, where the exploration rule is to always choose the path with the lowest currently estimated expected energy consumption,  an extension of the recent method in \cite{basso2019energy}.

\begin{table}
\centering
\begin{tabular}{lrr}  
\toprule
Agent  & Avg. Regret (Wh) \\
\midrule
N-Greedy & 83851.5 \\
LN-Greedy & 74550.8 \\
N-BayesUCB & 35792.9 \\
LN-BayesUCB & 14452.8 \\
N-TS & 8820.7 \\
LN-TS & 5664.2 \\
\bottomrule
\end{tabular}
\caption{Average cumulative regret at $t=400$.}
\label{tab:cumulative_regret}
\end{table}

We run the simulations with a horizon of $T = 400$ (i.e., $T=400$ sessions). Figure \ref{fig:cumulative_regret} and Table \ref{tab:cumulative_regret} show the cumulative regret for the Gaussian and Log-Gaussian models, where the regret is averaged over 5 runs for each agent. The intuition is that the energy saved by using the TS and UCB agents instead of the baseline greedy agent is the difference in regret, expressed in watt-hours. In Figure \ref{fig:instant_regret}, instant regret averaged over 5 runs is shown for the same scenario. It is clear that Thompson Sampling with the Log-Gaussian model has the best performance in terms of cumulative regret, but the other non-greedy agents also achieve good results. To illustrate  Thompson Sampling explores the road network in a reasonable way, Figure \ref{fig:exploration_ts} visualizes the road network and the paths visited by this exploration algorithm. We observe that no significant detours are performed, in the sense that most paths are close to the optimal path. This indicates the superiority  of Thompson Sampling to a random exploration method such as $\epsilon$-greedy in our application.

For the multi-agent case, we use a horizon of $T = 100$ and 10 scenarios where we vary the number of concurrent agents by $K \in [1, 10]$. The cumulative regret averaged over the agents in each scenario is shown in Figure \ref{fig:multi_cumulative_regret} for each $K$. In the figure, the final cumulative regret for each agent decreases sharply with the addition of just a few agents to the fleet. This continues until there are five agents, after which there seems to be diminishing returns in adding more agents. While there is some overhead (parallelism cost), just enabling two agents to share knowledge with each other decreases their average cumulative regret at $t=T$ by almost a third. This observation highlights the benefit of providing collaboration early in the exploration process, which is also supported by the regret bound in Theorem \ref{thm:ma_regret_bound_ts}. 

\subsection{Synthetic Networks} \label{sec:exp:sn}

In order to evaluate the regret bound in Theorem \ref{thm:graph_regret_bound_ts}, we design synthetic directed acyclic network instances $\mathcal{G}(V,E,w)$ according to a specified number of vertices $n$ and number of edges $q$ (with the constraint that $n - 1 \leq q \leq n (n - 1) / 2$). We start the procedure by adding $n$ vertices $v_1, \dots, v_{n}$ to $V$. Then for each $i \in [1,n-1]$ we add an edge $(i,i+1)$ to $E$. This ensures that the network contains a path with all vertices in $V$. Finally, we add $q-n$ edges $(i,j)$ uniformly at random to $E$, such that $i \neq j$, $i+1 \neq j$ and $i < j$.

Since these networks are synthetic, instead of modeling probabilistic energy consumption, we design instances where it is difficult for an exploration algorithm to find the path with the lowest expected cost. Given a synthetic network $\mathcal{G}$ generated according to the aforementioned procedure, we select $p = \langle v_1, \dots, v_n \rangle$ to be the optimal path. In other words, $p$ contains every vertex $v \in V$. The reward distribution for each edge $e$ in $p$ is chosen to be $\mathcal{N}(\epsilon(e) | \mu(e), \sigma_s^2(e))$ with $\mu(e) = 10$ and $\sigma_s^2(e) = 4$. For $(v_i, v_j) \in E$ where $(v_i, v_j) \notin p$, we set $\mu(e) = 11(j-i)$, where $j-i$ is the number of vertices skipped by the shortcut. This guarantees that no matter the size of the network and the number of edges that form shortcuts between vertices in $p$, $p$ will always have a lower expected cost than any other path in $\mathcal{G}$.

For the prior $\mathcal{N}(\mu(e) | \mu_0(e), \sigma_0^2(e))$, we set $\mu_0(e) = 11(j-i)$ and $\sigma_0^2(e) = 8$. This choice implies according to our prior, every path from the source $v_1$ to the target $v_n$ will initially have the same estimated expected cost.

We run the synthetic network experiment with $T = 2000$ sessions, varying the number of vertices $|V| \in \{30, 40, 50, 60\}$ and edges $|E| \in \{200, 300, 400\}$. In Figure \ref{fig:cumulative_regret_edges}, each plot represents the cumulative regret at $T = 2000$ for a fixed $|V|$, as a function of $|E|$. In Figure \ref{fig:cumulative_regret_vertices}, we instead look at the regret for fixed $|E|$ as a function of $|V|$. We observe that the regret increases with the number of edges and decreases with the number of vertices. This observation is consistent with the theoretical regret bound in Theorem \ref{thm:graph_regret_bound_ts}. By increasing the number of edges $|E|$ while $|V|$ is fixed, the number of paths $N$ and the other terms increase too, which yields a larger regret bound. On the other hand, with a fixed $|E|$ increasing the number of nodes $|V|$ increases the sparsity, i.e., the number of paths $N$ decreases, which in turn yields a lower regret bound.

\section{Conclusion}
We developed a Bayesian online learning framework for the problem of energy-efficient navigation of electric vehicles. Our Bayesian model assume a Gaussian or Log-Gaussian energy model. To learn  the unknown parameters of the model, we adapted exploration methods such as Thompson Sampling and UCB within the online learning framework. We extended the framework to multi-agent setting and established theoretical regret bounds in different settings. Finally, we demonstrated the performance of the framework with several real-world and synthetic experiments. 

\section*{Acknowledgements}
This work is funded by the Strategic Vehicle Research and Innovation Programme (FFI) of Sweden, through the project EENE. We want to thank the authors of \cite{basso2019energy} for providing us with data and their methods.

\clearpage
\bibliographystyle{named}
\bibliography{main}

\end{document}